\newcommand{\PREDICATES}{\ifmmode\mathcal{P}^\uparrow\else\(\mathcal{P}^\uparrow\)\fi}
\newcommand{\ACTIONS}{\ifmmode\mathcal{A}^\uparrow\else\(\mathcal{A}^\uparrow\)\fi}
\newcommand{\DOMAIN}{\ifmmode\mathcal{D}^\uparrow\else\(\mathcal{D}^\uparrow\)\fi}
\title{Epistemic  Exploration for Generalizable Planning and Learning in Non-Stationary Settings}
\author{
    Rushang Karia\equalcontrib\textsuperscript{\rm 1}, Pulkit Verma\equalcontrib\textsuperscript{\rm 1}, Alberto Speranzon\textsuperscript{\rm 2}, {\normalfont and} Siddharth Srivastava\textsuperscript{\rm 1}
    % %Authors
    % % All authors must be in the same font size and format.
    % Written by AAAI Press Staff\textsuperscript{\rm 1}\thanks{With help from the AAAI Publications Committee.}\\
    % AAAI Style Contributions by Pater Patel Schneider,
    % Sunil Issar,\\
    % J. Scott Penberthy,
    % George Ferguson,
    % Hans Guesgen,
    % Francisco Cruz\equalcontrib,
    % Marc Pujol-Gonzalez\equalcontrib
}
\title{My Publication Title --- Single Author}
\author {
    Author Name
}
\title{My Publication Title --- Multiple Authors}
\author {
    % Authors
    First Author Name\textsuperscript{\rm 1},
    Second Author Name\textsuperscript{\rm 2},
    Third Author Name\textsuperscript{\rm 1}
}
\newtheorem{theorem}{Theorem}
\theoremstyle{definition}
\newtheorem{definition}{Definition}[section]
\begin{document}

\maketitle

\begin{abstract}
This paper introduces a new approach for continual planning and model learning in relational, non-stationary stochastic environments. Such capabilities are essential for the deployment of sequential decision-making systems in the uncertain and constantly evolving real world. Working in such practical settings with unknown (and non-stationary) transition systems and changing tasks, the proposed framework models gaps in the agent's current state of knowledge and uses them to conduct focused, investigative explorations. Data collected using these explorations is used for learning generalizable probabilistic models for solving the current task despite continual changes in the environment dynamics. Empirical evaluations on several non-stationary benchmark domains show that this approach significantly outperforms planning and RL baselines in terms of sample complexity. Theoretical results show that the system exhibits desirable convergence properties when stationarity holds. 
\end{abstract}

\section{Introduction}
\label{sec:introduction}

This paper addresses the problem of planning in non-stationary stochastic settings with unknown domain dynamics. In particular, we consider problems where a goal-oriented agent is not given a closed-form model of the probabilities of states that may result upon execution of an action. Furthermore, these probabilities can change at potentially unknown time steps as the agent is executing in the environment. Such settings are commonly encountered by planning systems in the real-world. For example, an autonomous warehouse robot would be expected to continue achieving goals through different paths when some corridors get blocked due to spills or when the layouts of storage racks change to accommodate changing inventory profiles. Currently, such changes require renewed modeling by domain experts thus limiting the scope and deployability of automated planning methods.

These settings are technically challenging due to the need to correctly model uncertainty about the agent's knowledge when a discrepancy is detected, and to conduct focused exploration that can improve the agent's knowledge for subsequent planning. 
Prior work on the problem investigates the role of randomized exploration for addressing non-stationarity. E.g., if the rate of \emph{novelty} events inducing non-stationarity are sufficiently low compared to the timesteps available for learning in each epoch of stationary dynamics, Reinforcement Learning (RL) techniques such as Q-Learning with variations of $\epsilon$-greedy exploration can be guaranteed to successively converge to optimal policies.  However, these methods are likely to be sample-inefficient as the collection of new data is not easily focused towards parts of the environment that changed.  

%%[[ss: put section refs for the first mention of each topic in the next paras so that you don't need to create a separate para indexing the paper's sections: (DONE) ]]
We present a new framework for continual learning and planning under non-stationarity for such settings (Sec.\,\ref{subsec:non_stationarity_model_learning}), develop solution algorithms for this paradigm (Sec.\,\ref{subsec:continual_planning_and_learning}) and evaluate their performance across various forms of the problem, depending on whether the change in dynamics is known to the agent and whether the agent conducts comprehensive re-learning or need-based learning (Sec.\,\ref{sec:experiments}).   

Our approach addresses the challenges discussed above with autonomous processes for deliberative data gathering, planning, and model learning. It starts with the inputs available to a standard RL agent (a simulator, action names, and a reward generator), but instead of learning a policy, it interacts with the environment to first learn a relational probabilistic planning model geared towards solving the current goal, and then uses it to compute solution policies. When a discrepancy is detected, it flags aspects of the currently learned model that are no longer accurate, and conducts investigative exploration with auto-generated epistemically-guided policies to re-learn aspects that may have changed. The problem of computing useful investigative policies is non-trivial. This is reduced to a fully-observable non-deterministic (FOND)~\cite{Cimatti1998Strong} planning problem and solved without interacting with the simulator. The computed investigative policies are then executed and the resulting data is used to learn more accurate models. Although these executions are not focused on policy learning for the current task, they are used to learn and  maintain relational Probabilistic Planning Domain Description Language (PPDDL) style models. We show that (i) this significantly increases transferability and generalizability of learning, and 
% we observed that
(ii) the resulting paradigm vastly outperforms SOTA RL and existing model-based RL paradigms.

Our main contribution is the first known approach for using information about epistemic uncertainty of a logic-based internal probabilistic model to create exploration strategies, learn better models, and then compute plans even as transition systems change. 
%%[[ss: double check the following for clarity and correctness (done)]]
Additionally, this is also the first approach to interleave \emph{active learning} with epistemic exploration to discover a stochastic symbolic model suited for task transfer in non-stationary environments.
Empirical analysis on non-stationary versions of benchmark domains show that in such settings our approach (i) significantly reduces the sample complexity compared to SOTA baselines; (ii) can quickly adapt to changes in environment dynamics; and (iii) performs very close to an oracle that has access to all the information about changes in the environment apriori.

\section{Background}
\label{sec:background}

\textbf{Relational Markov Decision Processes (RMDPs)} We model tasks as RMDPs expressed in PPDDL \citep{DBLP:journals/jair/YounesLWA05}. An RMDP environment or \emph{domain} $\DOMAIN = \langle \PREDICATES, \ACTIONS \rangle$ is a tuple consisting of a set of parameterized predicates \PREDICATES \, and actions \ACTIONS. Here, \PREDICATES contains predicates of the form $p^\uparrow(x_1,\dots,x_m)$, and \ACTIONS contains actions of the form $a^\uparrow(x_1,\dots,x_n)$, where $x_i$ are the \emph{parameters}. We use $\uparrow$ to specify lifted predicates and actions with variables as arguments and omit the parameterization when it is clear from context. A \emph{grounded} RMDP task (or problem) is defined as a tuple $M = \langle \DOMAIN, O, S, A, \delta, R, s_0, g, \gamma \rangle$ where $O$ is a set of objects.
A literal $p(o_1, \ldots, o_n)$ represents a grounded predicate parameterized with objects $o_i \in O$. Formally, predicates are grounded by computing a mapping between their parameters to the objects, $\sigma(p^\uparrow(x_1,\dots,x_n),[o_1,\dots,o_n]) = p(o_1, \ldots, o_n)$, where $p^\uparrow \in $ \PREDICATES, $o_i \in O$. Similarly, $\sigma$ can also be used to lift grounded predicates and actions. We refer to $\mathcal{P}$ as the set of all possible grounded predicates derivable using $\mathcal{P}^\uparrow$ and $O$. For clarity, we use the notation $e^\uparrow$ to denote whether an entity $e$ is lifted and use $e$ otherwise.

A state $s$ is a complete valuation of all possible predicates $p \in \mathcal{P}$. Following the closed-world assumption, predicates whose values are false are omitted from the state representation. The set of all possible subsets of predicates forms the state space $S$ of the RMDP $M$. Similarly, the action space $A$ of $M$ is formed by grounding each action $a^{\uparrow} \in \ACTIONS$.  $\delta : S \times A \times S \rightarrow [0, 1]$ is the transition function and is implemented by a simulator.   
For a given \emph{transition} $\tau = (s, a, s')$, $\delta(s, a, s')$ specifies the probability of executing action $a \in A$ in a state $s \in S$ and reaching a state $s' \in S$. 
Naturally, $\sum_{s' \in S} \delta(s, a, s') = 1$ for any $s \in S$ and $a \in A$. 

The simulator $\Delta: S \times A \rightarrow S$ is a function that returns a state $s'$ on executing $a$ in $s$ by sampling over $\delta$. Executing an action $\Delta(s, a)$ constitutes one \emph{step} on the simulator. $|\Delta|$ represents the total steps executed by the simulator and $\Delta_\mathcal{S} \in \mathbb{N}^{+}$ indicates the simulator step budget after which the simulator cannot be used.
$s_0$ is the initial state and $g$ is a conjunctive first-order logic goal formula obtained using \PREDICATES and $O$. A goal state $s_g \in S$ is a state such that $s_g \models g$. $R: S \times A \rightarrow \{0, -1\}$ is the reward function and $R(s, a)$ indicates the reward obtained for executing action $a$ in state $s$. For all $a \in A$, we set $R(s_g, a) = 0$ for any goal state $s_g$ and $R(s, a) = -1$ otherwise. $\gamma \in [0, 1)$ is the discount factor. Execution begins in the initial state and terminates when a goal state is reached or when a horizon $H \in \mathbb{N}^{+}$ has been exceeded. An RMDP task is \emph{accomplished} whenever execution terminates in a goal state.

\noindent\textbf{Running Example} Consider a robot that is deployed to assist in a warehouse. The robot is equipped with sensors and actuators (e.g., camera, wheels, grippers, etc.) that can help it perform a variety of tasks such as cleaning floors, restocking shelves, etc. Such tasks could be specified by using a domain with $\PREDICATES=$
$\{ \texttt{robot-at}^{\uparrow}(r_x, l_x), 
\texttt{box-at}^{\uparrow}(l_x, b_x),$$\texttt{holding}^{\uparrow}(r_x, b_x),$ $\texttt{handempty}^{\uparrow}(r_x)\}$. $\ACTIONS$ would consist of actions such as $\texttt{move-from}^{\uparrow}(r_x, l_x, l_y), \texttt{pick-up}^{\uparrow}(r_x, l_x, b_x),$ etc. with their transition function implemented by a simulator.

\noindent\textbf{Example RMDP task} Consider an environment 
with one robot $r_1$, two locations $l_1, l_2$, and one box $b_1$.
An RMDP task of moving $b_1$ to $l_1$ and parking $r_1$ anywhere
could be modeled as $M$ where 
$O=\{r_1, l_1, l_2, b_1\}$,
$s_0 =$ $\{\texttt{handempty}(r_1), $ $\texttt{robot-at}(r_1, l_1),$ $ \texttt{box-at}(b_1, l_2)\}$, and
$g = \texttt{box-at}(b_1, l_1) \land \exists l_x \text{ } \texttt{robot-at}(r_1, l_x)$.

A solution to an RMDP is a \emph{deterministic policy} $\pi: S \rightarrow A$ that maps states to actions. The value of a state $s$ when following a policy $\pi$ is defined as the expected cumulative reward obtained when executing $a$ in $s$ and following $\pi$ thereafter, i.e., $V^\pi(s) = R(s, a) + \gamma\sum_{s' \in S}\delta(s, a, s')V^\pi(s')$
The objective of an RMDP is to compute an \emph{optimal} policy $\pi^*$ that maximizes the expected reward obtained by following it.\footnote{Without loss of generality, we focus on optimal policies that are optimal w.r.t. the initial state $s_0$.} Model-based RMDP algorithms compute $\pi^*(s_0)$ by solving the \emph{Bellman Optimality Equation} iteratively starting from $s_0$ \citep{DBLP:books/lib/SuttonB98}:
\begin{align}
\label{eqn:bellman_value_eqn}
    V^*(s) &= \max_a \left[ R(s, a) + \gamma\sum_{s' \in S} \delta(s, a, s')V^*(s') \right]
\end{align}
The above equation requires access to closed-form knowledge of the transition function $\delta$. When such information is unavailable, RL-based RMDP algorithms use sample estimates of Q-values instead. Given a policy $\pi$, the Q-value of a state $s$ when executing action $a$ is defined as the expected reward obtained when executing $a$ in $s$ and following $\pi$ thereafter, i.e. $Q^\pi(s, a) = \mathbb{E}_\pi \left[ \sum_{t=0}^\infty \gamma^t R(S_t, A_t) | S_0 = s, A_0 = a\right]$.  The Q-Learning Equation \citep{Watkins:1989} can be written as: 
% using Q-values 
as:
\begin{align*}
    Q(s, a)\!&=\!(1 - \alpha) Q(s, a)\!+\!\alpha\!\left[R(s, a)\!+\!\gamma\max_{a' \in A}Q(s'\!, a')\right]
\end{align*}
where $\alpha \in [0, 1]$ is the learning rate. It employs an exploration strategy such as $\epsilon$-greedy wherein a random action is selected with probability $\epsilon$ and selecting the greedy action $\arg\max\limits_{a}Q(s, a)$ otherwise. Q-Learning has been shown to converge to the optimal policy \citep{DBLP:books/lib/SuttonB98}. 

\noindent\textbf{PPDDL transition models} Our approach learns lifted PPDDL models that can be used for stochastic planning using Eqn.\,\ref{eqn:bellman_value_eqn}. We note that the simulator's implementation of the transition function could be arbitrary and does not need to be a PPDDL model. Given an RMDP $M$, a PPDDL model $\mathcal{M}_{a}$ for an action $a(o_1, \ldots, o_n) \in A$ is a tuple $\langle \textit{Pre}_a, \textit{Prob}_a, \textit{Eff}_a \rangle$. We omit the subscript when it is clear from context. $\textit{Pre}$ represents the precondition and is expressed as a conjunctive formula of predicates $p \in \mathcal{P}_a$ where $\mathcal{P}_a = \{ \sigma(p^\uparrow(x_1, \ldots, x_m), [o_i, \ldots, o_j]| p^\uparrow \in \PREDICATES\}$. $\textit{Prob}$ is a list of probabilities such that $\sum_i \textit{Prob}[i] = 1$. $\textit{Eff}$ is a list of effects. Each effect $\textit{Eff}[i] \in \textit{Eff}$ is a tuple $\langle \textit{Eff}[i]^-, \textit{Eff}[i]^+ \rangle$ both of which are sets composed of predicates $p \in \mathcal{P}_a$. 

An action $a$ is \emph{applicable} in a state $s$ iff $s \models \textit{Pre}$. An effect $\textit{Eff}[i]$ when applied to a state $s$ results in a state $s \setminus \textit{Eff}[i]^- \cup \text{ } \textit{Eff}[i]^+$. Applying an action $a$ to a state $s$ results in exactly one effect $\textit{Eff}[i]$ being applied with probability $\textit{Prob}[i]$ if the action is applicable else the state remains unchanged. A PPDDL transition model $\mathcal{M} = \{\mathcal{M}_a|a \in A\}$ translates to a closed-form specification of the transition function $\delta$ of $M$, i.e., $\mathcal{M} \equiv \delta$. A lifted (grounded) PPDDL model $\mathcal{M}_{a^\uparrow} (\mathcal{M}_a)$ can be easily obtained from $\mathcal{M}_a (\mathcal{M}_{a^\uparrow})$ using $\sigma$. As is the case with RMDP domains, several RMDP tasks from a single domain can also share the same lifted PPDDL model $\mathcal{M}^{\uparrow} = \{\mathcal{M}_{a^{\uparrow}} | a \in \ACTIONS\}$. \\
\textbf{Example} The $\texttt{pick-up}^{\uparrow}(r_x, l_x, b_x)$ action described in the running example could be modeled as a PPDDL model $\mathcal{M}_{\textit{pick-up}^{\uparrow}}$ with precondition 
$\textit{Pre} =$ 
$\texttt{box-at}^{\uparrow}(b_x, l_x) \text{ } \land$ 
$\texttt{robot-at}^{\uparrow}(r_x, l_x)\text{ }\land $ $\texttt{handempty}^{\uparrow}(r_x)$ 
to indicate that the action is applicable only when the robot is not holding anything is at the same location as the box. The effects could be modeled as $\textit{Eff}[0] 
=$ $\langle \{ \neg\texttt{box-at}^{\uparrow}(b_x, l_x),$ 
$\neg\texttt{handempty}^{\uparrow}(r_x) \}$ 
$\{ \texttt{holding}^{\uparrow}(r_x, b_x)\} \rangle$ 
to indicate that the robot successfully picked up the box and is currently holding it. Similarly, another effect $\textit{Eff}[1] = \{ \}$ with $\textit{Prob}[1] =$ $0.1$ could be used to model a slippery gripper with a 10\% chance to fail to pick-up the box. 

\begin{definition}[$\mathcal{M}$-Consistent Transition]
Given a PPDDL model $\mathcal{M}$ and an action $a(o_1, \ldots, o_n) \in A$ of an RMDP $M$, a transition $\tau = (s, a, s')$ where $s, s' \in S$ is said to be $\mathcal{M}$-consistent, $\tau \rightleftharpoons \mathcal{M}$, iff $s = s'$ when $s \not\models \textit{Pre}$ or $\exists i$ such that $\textit{Prob}[i] > 0$ and $s' = s \setminus \textit{Eff}[i]^{-} \cup \textit{Eff}[i]^{+}$ whenever $s \models \textit{Pre}$.
\end{definition}

A lifted PPDDL model $\mathcal{M}_{a^\uparrow}$ is implicitly converted to a grounded PPDDL model $\mathcal{M}_{\sigma(a^\uparrow, o_1, \ldots, o_n)}$ when checking for $\mathcal{M}$-consistency w.r.t. a transition $\tau$.

\noindent\textbf{PPDDL Model-Learning} Given a dataset $\mathcal{T}$ that is composed of a set of transitions $\tau = (s, a, s')$ obtained from an RMDP task, the PPDDL model-learning problem is to compute a model $\mathcal{M}$ s.t. $\tau \rightleftharpoons \mathcal{M}$ for any $\tau \in \mathcal{T}$. 
The two major techniques of model learning are active and passive learning. Active learners interactively explore the state space to generate $\mathcal{T}$ for learning the model whereas passive learners require $\mathcal{T}$ to be provided as input. We use active learning as it has been shown to work well for deterministic, non-stationary settings~\cite{DBLP:conf/aaai/Nayyar0S22}.

\begin{definition}[Policy Trace]
Given an RMDP $M$ and simulator $\Delta$, a policy trace $\Delta_\pi = \langle s_0, a_0, \ldots, a_{n-1}, s_n \rangle$ of a policy $\pi$ is a sequence of states and actions where $s_i \in S, a_i \in A$ s.t. $a_i = \pi(s_i)$ and $s_{i+1} = \Delta(s_i, a_i)$.
\end{definition}

\begin{definition}[$p$-distinguishing policies]
Given an RMDP $M$, a predicate $p$, policies $\pi_1$, $\pi_2$ and a simulator $\Delta$, $\pi_1$ and $\pi_2$ are $p$-distinguishing policies iff $\exists i$ s.t. for policy traces $\Delta_{\pi_{1}}$ and $\Delta_{\pi_{2}}$, $p \in s^{\Delta_{\pi_1}}_{i}$ and $p \not\in s^{\Delta_{\pi_2}}_{i}$.
\end{definition}

\noindent\textbf{Active Query-based Model Learning (AQML)} AQML is an epistemic method that seeks to prune the space of models under consideration by guiding exploration towards states that can help update the model. The key observation is that for any given $a^{\uparrow} \in \mathcal{A}^{\uparrow}$, a predicate $p^{\uparrow}$ can appear as a positive precondition, a negative precondition, or not appear at all in $\mathcal{M}_{a^{\uparrow}}$. Similarly, $p^{\uparrow}$ could appear in any of these modes in any of the effect lists of $\mathcal{M}_{a^{\uparrow}}$. This induces an exponentially large number of models over which a model-learner must search. We can prune this search space by selecting a predicate $p^\uparrow$ and generating candidate models
$\mathcal{M}_{a^\uparrow}^{+p_{(\textit{Pre}|\textit{Eff})}}$ 
$\mathcal{M}_{a^\uparrow}^{-p_{(\textit{Pre}|\textit{Eff})}}$ 
$\mathcal{M}_{a^\uparrow}^{\otimes p_{(\textit{Pre}|\textit{Eff})}}$ 
where $p^\uparrow$ appears in a positive $(+)$, negative $(-)$, or absent $(\otimes)$ mode in the preconditions $\textit{Pre}^\uparrow$ or effects $\textit{Eff}^\uparrow$ respectively. Ignoring probabilities, AQML uses a combination of any two pairs of these models, and \textit{reduces} query synthesis to a Fully Observable Non-Deterministic (FOND) problem. The central idea behind this reduction is that the two models being used correspond to two separate copies of each predicate in the FOND problem, and a solution is found when a state is reached such that the two copies of predicates do not match.
This problem can be passed to off-the-shelf solvers and the solution to these FOND problems are policies that AQML uses as \emph{queries} to the planning agent. Due to the nature of these models where only a single predicate is changed, solution policies of any pair of these models are guaranteed to be $p$-distinguishing or unsolvable. AQML then checks which model of the predicate $p^\uparrow$ is consistent with the simulator and updates $\mathcal{M}_{a^\uparrow}$ appropriately (either in preconditions or one of the effects). The process repeats for the next predicate $p'^\uparrow$ with the difference being that the learned information about $p^\uparrow$ can now be considered by the FOND planner in the subsequent learning process. \\
\textbf{Example} Upon identifying that $\neg\texttt{handempty}^\uparrow(r_x)$ is an effect of the $\texttt{pick-up}^\uparrow(r_x, l_x, b_x)$ action, AQML can generate distinguishing queries by using a FOND planner to resolve other uncertainties such as whether $\neg\texttt{handempty}^\uparrow(r_x)$ is a precondition of $\texttt{put-down}^\uparrow(r_x, l_x, b_x)$. AQML does this by generating two abstract models, one with predicate $\texttt{handempty}^\uparrow(r_x)$ in the precondition of $\texttt{put-down}^\uparrow(r_x, l_x, b_x)$, and another where it is absent.  As part of the policy generated by the FOND planner it would be ensured
% a FOND planner can use this information to resolve other uncertainties such as whether $\neg\texttt{handempty}^\uparrow(r_x)$ is a precondition of $\texttt{put-down}^\uparrow(r_x, l_x, b_x)$ wherein the planner would generate a policy that would ensure 
that $\neg\texttt{handempty}^\uparrow(r_x)$ is true in the state before executing the $\texttt{put-down}$ action (possibly by executing a pick-up action).

The key insight is that unlike other methods, this learning methodology does not wait for random exploration to generate $p$-distinguishing policies but rather actively encourages exploration by utilizing 
information about parts of the model that are inaccurate. We discuss how such components are annotated in Sec.~\ref{subsec:non_stationarity_model_learning}.
% learned model to conduct a directed search thus leading 
This leads
to improved sample efficiency in converging to a model $\mathcal{M} \equiv \delta$, i.e., $\mathcal{M}$ translates to a closed-form specification of the transition function $\delta$. Once a $p$-distinguishing policy is identified, probabilities can be estimated using Maximum Likelihood Estimation (MLE) by executing the policy $\eta$ times where $\eta$ is a configurable hyperparameter that represents the sampling frequency. 

There are two difficulties with vanilla AQML. Firstly, complete models are learned in a single pass in order to guarantee correctness. Secondly, this framework assumes stationarity of the simulator and the query synthesis process is not resilient to changing environment dynamics during the model-learning loop. As a result, AQML cannot efficiently use learned information to update the model when only small parts of the transition system change.

\section{Our Approach: Adaptive Model Learning}
\label{sec:our_approach}

We use an active learning approach as it can cope with non-stationarity. Existing approaches using active learning are sample inefficient since they are comprehensive learners that relearn from scratch. Building upon the Active Query-based Model Learning framework (AQML)~\citep{DBLP:journals/corr/abs-2306-04806}, we develop a paradigm that can work in the presence of non-stationarity. We now begin by describing the problem that we address, followed by a detailed overview of our approach.

\begin{definition}[RMDP equivalence]
Given a domain $\DOMAIN$ and RMDP tasks $M_i$ and $M_j$ derived using $\mathcal{D}$, we define $M_i = M_j$ iff their objects are the same $O_{M_{i}} = O_{M_{j}}$, the initial state and goals are equal $s_{o_{M_{i}}} = s_{o_{M_{j}}}$ and $g_{M_{i}} = g_{M_{j}}$, and the transition systems are equivalent $\delta_{M_{i}} = \delta_{M_{j}}$.
\end{definition}

\begin{definition}[Continual Planning under Non-Stationarity]
Given a stream of RMDP tasks $\overline{M} = \langle M_1, \ldots, M_n \rangle$ where $M_i \not= M_{i+1}$, a simulator $\Delta$ with budget $\Delta_\mathcal{S}$ per task, and with the simulators transition system changing at arbitrary intervals, the objective is to maximize the total tasks accomplished within $|\overline{M}|\Delta_\mathcal{S}$.    
\end{definition}

The above problem setting captures many real-world scenarios where environment dynamics often change \emph{in situ}, i.e., while the agent is actively solving a stream of tasks and without informing the agent. E.g., events like liquid spills on the gripper affecting its friction, navigation pathways being blocked, etc. are outside the robot's control and can arbitrarily change at any given moment. Implicitly, this translates to the agent indirectly optimizing a new RMDP task with the same goal but different transition system. The overall objective is to enable solving all tasks in a sample-efficient fashion thus making it essential to learn-and-transfer knowledge. An agent that learns a fixed model of the environment or one that is incapable of detecting such change can thus perform quite poorly or dangerously.

% \noindent\textbf{Planning and Learning} is a framework that learns a model and uses model-based planning to compute policies. 
We consider the following taxonomy of the methods for continual planning under non-stationarity;
% (a) Active/Passive Learners where active learners actively explore the state space to generate $\mathcal{T}$ whereas passive learners require $\mathcal{T}$ as input; 
(a) Adaptive vs. Non-adaptive learners where adaptive learners can automatically adapt to unknown changes in the transition system, whereas the latter cannot and needs to be informed that a change has occurred and in some cases need to be informed of exactly what changed; (b) Comprehensive vs. Need-based learners where the former completely learn a new model from scratch whereas the latter only perform updates to fix the model w.r.t. transitions that are not $\mathcal{M}$-consistent.

We integrate planning and learning by continually learning and updating a PPDDL model of the environment and using it to accomplish tasks. We develop an active, need-based learner that automatically detects and adapts to changes in the transition system. Our approach actively monitors simulator execution and performs sample efficient active learning via directed exploration when transitions are inconsistent w.r.t. the current model. We now describe the components that facilitate continual learning for planning.

\subsection{Non-stationarity Aware Model Learning}
\label{subsec:non_stationarity_model_learning}
We significantly alter the AQML framework so that it can work even if the transition system changes during the model-learning process (as policy traces are being generated using the simulator) and enable it to selectively and correctly learn information that is not consistent with the learned model. We accomplish this by always monitoring executions of the simulator. If a transition $\tau = (s, a, s')$ is not consistent w.r.t. the learned model $\mathcal{M}$, i.e., $\tau \not\rightleftharpoons \mathcal{M}$, then we simultaneously update the model-learning process since a new query now needs to be synthesized that can resolve the inconsistency. To do so, we identify the predicates $p^{\uparrow}$ in the preconditions (or effects) of $a$ that were inconsistent with the model and then we add $p^{\uparrow}$ in the precondition (or effect) of $a$ to be relearned. This also applies to inconsistencies identified as policy traces are being generated as a part of the model-learning process. The new FOND problem will not include $p^\uparrow$ in the action $a$ in any form in its precondition (or effect) and thus the planner will need to compute an alternate solution for the current query. \\
\textbf{Example} If a predicate $\neg p^{\uparrow} \in \textit{Pre}_{a}$, $p \in s$ and $s \not= s'$ then this means that the action executed successfully on the simulator and the precondition $\neg p^{\uparrow}$ is incorrectly represented in the currently learned model $\mathcal{M}^{\uparrow}_{a}$ and must be relearned. We then add $\mathcal{M}_{a}^{+l_{\textit{pre}}}$ and $\mathcal{M}_{a}^{\otimes l_{\textit{pre}}}$ to the list of models that need to be considered again by the query-synthesis process.

\subsection{Goodness of Fit Tests}
\label{subsec:goodness_of_fit_test}
Another key difficulty when operating in non-stationary environments is when the transitions themselves are consistent w.r.t. the preconditions and effects but are drawn from a significantly different distribution. For example, two models of an action with similar preconditions and effects but differing only in the probabilities of effects can impact the ability of an agent to solve a task. \\
\textbf{Example} In our running example of a slippery gripper, as the probability of slippage increases, the optimal policy might switch to navigating to a human operator and communicating to them to pick up the object. 

Such changes cannot be quickly reflected if only MLE estimates are used to compute probabilities since these estimates can be slow to adapt to the new distribution. We mitigate this by including \emph{goodness-of-fit} tests in the planning and learning loop that actively invigilate whether the distributions have undergone shift and can promptly restart the MLE estimation process.

We use Pearson's chi-square test \cite{Pearson1992} for detecting o.o.d. effects as follows. Once a model $\mathcal{M}_{a^{\uparrow}}$ for an action has been learned (or a new task is specified), we initialize a table entry $\textit{Freq}_{a^{\uparrow}}[i] = 0$ for each effect $\textit{Eff}[i] \in \mathcal{M}_{a^{\uparrow}}$. Whenever a new $\mathcal{M}$-consistent transition $(s, a, s')$ is obtained using the simulator, we identify the index $i$ s.t. $s' = s \setminus \textit{Eff}[i]^{-} \cup \textit{Eff}[i]^{+}$. We then increment $\textit{Freq}_{a^{\uparrow}}[i]$ and perform a goodness of fit test using Pearson's chi-square test with $0$ degrees of freedom.

\begin{equation*}
\chi^{2} = \sum\limits_{i=1}^{n} \frac{(\textit{Freq}_{a^{\uparrow}}[i] - F \times \textit{Prob}_{a^{\uparrow}}[i]))^{2}}{F \times \textit{Prob}_{a^{\uparrow}}[i]}
\end{equation*}
where $F = \sum\limits_{i=1}^{n}\textit{Freq}_{a^{\uparrow}}[i]$ is total observed frequency for $a$. If the confidence computed using $\chi^2$ is less than some threshold $\theta$ ($0.05$ in our experiments), the goodness-of-fit test is deemed to have failed and we reset the probabilities for all effects in $a$. To ensure that we have enough samples, we only perform this test when $F > 100$. We then update the probabilities using the recorded frequencies via MLE, i.e., $\textit{Prob}_{a^{\uparrow}}[i] = \frac{\textit{Freq}_{a^{\uparrow}}[i]}{F}$.

\subsection{Continual Learning and Planning (CLaP)}
\label{subsec:continual_planning_and_learning}

% \begin{algorithm}[t]

% \begin{algorithmic}[1]
% \REQUIRE RMDP $M$, Simulator $\Delta$, Simulator Budget $\Delta_\mathcal{S}$, Learned Model $\mathcal{M}^\uparrow$, Horizon $H$, Sampling Count $\eta$, Threshold $\theta$,
% Failure Threshold $\beta$
% \STATE $s \gets s_0$
% \STATE $h \gets 0; f \gets 0$
% \STATE $\pi \gets $ modelBasedSolver$(S, A, s_0, g,\mathcal{M}^\uparrow, R, \gamma, H)$
%     \WHILE {$|\Delta| < \Delta_\mathcal{S}$}

%     \IF {$f > \beta$ or unreachableGoal$(s_0, g, \pi)$}
%         \STATE $s \gets $ explore$(\mathcal{M}^\uparrow, \Delta)$
%     \ENDIF
    
%     \IF{needsLearning$(\mathcal{M})$}
%         \STATE $\mathcal{M}^\uparrow \gets$ learnModel$(\Delta, \mathcal{M}^\uparrow)$
%         \STATE $\pi \gets$ modelBasedSolver$(S, A, s_0, g,\mathcal{M}^\uparrow, R, \gamma, H)$
%     \ENDIF
    
%     \STATE $a \gets \pi(s)$
%     \STATE $s' \gets \Delta(s, a)$
%     \STATE $h \gets h + 1$
        
%     \IF {$(s, a, s') \rightleftharpoons \mathcal{M}^\uparrow$}
%         \STATE $\mathcal{M} \gets $ goodnessOfFitTest$(s, a, s', \Delta, \mathcal{M}^\uparrow, \theta, \textit{Freq})$
%     \ELSE
%         \STATE $\mathcal{M}^\uparrow \gets $ addUnconformantPredicates$(s, a, s', \mathcal{M}^\uparrow)$
%     \ENDIF

%     \IF {$s \models g$ \OR $h \ge H$}
%         \STATE $s \gets s_0$; $f \gets f + 1 \text{ iff } s \not\models g$
%     \ELSE
%         \STATE $s \gets s'$
%     \ENDIF
%     \ENDWHILE

%     \RETURN $\mathcal{M}^\uparrow$
% \end{algorithmic}

% \caption{Continual Learning and Planning (CLaP)}
% \label{alg:learn_and_plan}
% \end{algorithm}
% \SetAlCapHSkip{0em}
\begin{algorithm}[t]
    \small
% \DecMargin{0}
    \SetAlgoNlRelativeSize{-1}
    \SetInd{0.4em}{0.7em}
% \begin{algorithmic}
    \DontPrintSemicolon
    \SetKwInOut{Input}{Input}
    \SetKwInOut{Output}{Output}
    \Input{RMDP $M$, Simulator $\Delta$, Simulator Budget $\Delta_\mathcal{S}$, Learned Model $\mathcal{M}^\uparrow$, Horizon $H$, Sampling Count $\eta$, Threshold $\theta$,
    Failure Threshold $\beta$}
    \Output{$\mathcal{M}^\uparrow$}
    $s \gets s_0$;
     $h \gets 0$; 
     $f \gets 0$\;
    $\pi \gets $ modelBasedSolver$(S, A, s_0, g,\mathcal{M}^\uparrow\!, R, \gamma, H)$\;
    \While{$|\Delta| < \Delta_\mathcal{S}$}{

        \If{$f > \beta$ or unreachableGoal$(s_0, g, \mathcal{M}^\uparrow, \pi)$}{
              explore$(\mathcal{M}^\uparrow, \Delta)$\;
             }
        \If{needsLearning$(\mathcal{M})$}{
             $\mathcal{M}^\uparrow \gets$ learnModel$(\Delta, \mathcal{M}^\uparrow)$\;
            $\pi \gets$ modelBasedSolver$(S, A, s_0, g,\mathcal{M}^\uparrow\!, R, \gamma, H)$
        }
        
        $a \gets \pi(s)$ \;
        $s' \gets \Delta(s, a)$ \;
        $h \gets h + 1$ \;
            
        \If {$(s, a, s') \rightleftharpoons \mathcal{M}^\uparrow$}{
             $\mathcal{M} \!\gets\! $ goodnessOfFitTest$(s, a, s', \Delta, \mathcal{M}^\uparrow, \theta, \textit{Freq})$\;
        }\Else{
            \!$\mathcal{M}^\uparrow \!\gets \!$addInconsistentPredicates$(s, a, s'\!, \mathcal{M}^\uparrow)$\;
        }
    
        \If {$s \models g$ or $h \ge H$}{
            $s \gets s_0$; $f \gets f + 1 \text{ iff } s \not\models g$\;
        }\Else{
            $s \gets s'$\;
        }
    }
    \Return  $\mathcal{M}^\uparrow$\;
% \end{algorithmic}

\caption{Continual~Learning~and~Planning}
\label{alg:learn_and_plan}
\end{algorithm}

% \begin{algorithm}[t]

% \begin{algorithmic}[1]
% \REQUIRE RMDP $M$, Simulator $\Delta$, Simulator Budget $\Delta_\mathcal{S}$, Learned Model $\mathcal{M}^\uparrow$, Horizon $H$, Sampling Count $\eta$, Threshold $\theta$,
% Failure Threshold $\beta$
% \STATE $s \gets s_0$
% \STATE $h \gets 0; f \gets 0$
% \STATE $\pi \gets $ modelBasedSolver$(S, A, s_0, g,\mathcal{M}^\uparrow, R, \gamma, H)$
%     \WHILE {$|\Delta| < \Delta_\mathcal{S}$}

%     \IF {$f > \beta$ or unreachableGoal$(s_0, g, \pi)$}
%         \STATE $s \gets $ explore$(\mathcal{M}^\uparrow, \Delta)$
%     \ENDIF
    
%     \IF{needsLearning$(\mathcal{M})$}
%         \STATE $\mathcal{M}^\uparrow \gets$ learnModel$(\Delta, \mathcal{M}^\uparrow)$
%         \STATE $\pi \gets$ modelBasedSolver$(S, A, s_0, g,\mathcal{M}^\uparrow, R, \gamma, H)$
%     \ENDIF
    
%     \STATE $a \gets \pi(s)$
%     \STATE $s' \gets \Delta(s, a)$
%     \STATE $h \gets h + 1$
        
%     \IF {$(s, a, s') \rightleftharpoons \mathcal{M}^\uparrow$}
%         \STATE $\mathcal{M} \gets $ goodnessOfFitTest$(s, a, s', \Delta, \mathcal{M}^\uparrow, \theta, \textit{Freq})$
%     \ELSE
%         \STATE $\mathcal{M}^\uparrow \gets $ addUnconformantPredicates$(s, a, s', \mathcal{M}^\uparrow)$
%     \ENDIF

%     \IF {$s \models g$ \OR $h \ge H$}
%         \STATE $s \gets s_0$; $f \gets f + 1 \text{ iff } s \not\models g$
%     \ELSE
%         \STATE $s \gets s'$
%     \ENDIF
%     \ENDWHILE

%     \RETURN $\mathcal{M}^\uparrow$
% \end{algorithmic}

% \caption{Continual Learning and Planning (CLaP)}
% \label{alg:learn_and_plan}
% \end{algorithm}

Our approach of continual learning of PPDDL models has two key advantages. Firstly, since we learn models, Eqn.\,\ref{eqn:bellman_value_eqn} can be used to compute policies for the task without needing to collect experience from the simulator. Secondly, lifted PPDDL models are \emph{generalizable} in that they can be zero-shot transferred to tasks with differing object names, quantities, and/or goals.
 For example, the same $\texttt{pick-up}^\uparrow(r_x, l_x, b_x)$ action described earlier can be reused by different RMDP tasks with differing numbers of robots, locations, and/or packages. This methodology allows our approach to solve tasks efficiently.

Alg.\,\ref{alg:learn_and_plan} describes our overall process for continual learning and planning. The algorithm takes as input an RMDP task $M$, a simulator $\Delta$, a simulator budget $\Delta_\mathcal{S}$, a learned model $\mathcal{M}^\uparrow$, and hyperparameters $H, \eta, \beta$, and $\theta$ representing the horizon, sampling count, failure threshold, and confidence threshold respectively. Note that in the context of Alg.\,\ref{alg:learn_and_plan}, $M$ only specifies the initial state $s_0$ and goal $g$ for the task. The transition system represented by the simulator can arbitrarily change at any time but the agent still perceives it as the same task. Alg.\,\ref{alg:learn_and_plan} attempts to compute a policy $\pi$ for $M$ using the learned model $\mathcal{M}^\uparrow$ (line 2) using an off-the-shelf RMDP solver such as LAO* \cite{hansen2001lao}. 

If the transition graph of $\pi$ derived using $\mathcal{M}^\uparrow$ has no path to the goal or if the goal has not been reached for a certain threshold (lines 4-5) the agent performs an exploration of the state space using the simulator in order to find a transition that is not $\mathcal{M}$-consistent. Initially, when the learned model is empty (empty preconditions and effects for all actions), this step allows the agent to quickly discover transitions for which useful learning can be performed. We used random walks of length $H$ to conduct this exploration step in our experiments. If an inconsistent transition is discovered as part of the exploration process, then several models to consider are added to the model learner using the approach in Sec.\,\ref{subsec:non_stationarity_model_learning}. This causes model learning to be invoked to resolve the inconsistency and updates the learned model $\mathcal{M}^\uparrow$ (line 7). We note that, as mentioned in Sec.\,\ref{subsec:non_stationarity_model_learning}, if new inconsistencies are identified during the model learning then they are resolved as well. Since the model has been updated, a new policy is computed (line 8). 

Once any learning steps are complete and $\pi$ has been computed, we execute an action $a = \pi(s)$ on the simulator (lines 9-10). If $(s, a, \Delta(s, a)) \rightleftharpoons \mathcal{M}$, then a goodness of fit test is performed to improve probability estimates as noted in Sec.\,\ref{subsec:goodness_of_fit_test} (line 13). An inconsistent transition always adds new models for the inconsistencies that need to be resolved by the model learner (line 15). If the goal is reached or the horizon is exceeded, the simulator is reset to the initial state and the total failures are incremented accordingly (lines 16-17). Finally, once the budget is exhausted (line 3) the learned model is returned (line 20) that can be used for solving future tasks.

\subsection{Theoretical Results}

\begin{definition}[Variational Distance (VD)]
Given an RMDP $M$, let $\mathcal{Z} = \{ (s, a, s') | s, s' \in S, a \in A\}$ be a set of transitions. Also let $\mathcal{M}$ and $\mathcal{M}'$ be two models. The Variational Distance (VD) between these two models is then defined as
$
\text{VD}_\mathcal{Z}(\mathcal{M}, \mathcal{M}') = \frac{1}{|\mathcal{Z}|}\sum\limits_{\zeta \in \mathcal{Z}} |\mathds{1}_{\zeta \rightleftharpoons \mathcal{M}} - \mathds{1}_{\zeta\rightleftharpoons \mathcal{M}'}|
$.
\end{definition}

\begin{definition}[Locally Convergent Model Learning]
Given an RMDP $M$, let $\mathcal{M}$ be the current model and $\mathcal{M}_{\delta}$ be the accurate (unknown) model s.t. $\mathcal{M}_{\delta} \equiv \delta$. Consider $\varepsilon$ to be an error bound on the variational distance between two models. Model learning is locally convergent iff $\forall \varepsilon$ such that  $0<\varepsilon< \text{VD}_{\tau_n}(\mathcal{M},\mathcal{M}_{\delta})$, $\exists n \in \mathbb{N}$ and a set $\tau_n$ of $n$ distinct transitions sampled from $\delta$, s.t. the model $\mathcal{M}'$ learned using any $\mathcal{T}$ containing $\tau_n (\tau_n \subseteq \mathcal{T})$ will satisfy: VD$_\mathcal{T}(\mathcal{M}', \mathcal{M}_{\delta}) \leq \varepsilon < \text{VD}_{\tau_n}(\mathcal{M}, \mathcal{M}_{\delta})$.
\end{definition}

% errlor value is a epsilon

% for all errors we can a set of transitions 
\begin{theorem}
Let $M$ be an RMDP with a series of transition system changes $\delta_1, \ldots, \delta_n$ at timesteps $t_1, \ldots, t_n$ implemented using a simulator $\Delta$, then during each stationary epoch between $t_i$ and $t_{i+1}$ Alg.\,\ref{alg:learn_and_plan} performs locally convergent model learning.
\end{theorem}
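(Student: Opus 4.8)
The plan is to show that within any stationary epoch $[t_i, t_{i+1})$, Algorithm~\ref{alg:learn_and_plan} behaves, after a finite number of steps, exactly like vanilla AQML running against a \emph{fixed} simulator $\delta_i$, and then invoke the known convergence guarantees of AQML together with the monitoring machinery of Sec.~\ref{subsec:non_stationarity_model_learning}. Concretely, fix an epoch with transition function $\delta := \delta_i$ and accurate model $\mathcal{M}_\delta \equiv \delta$. Let $\mathcal{M}$ be the model held by the algorithm at the start of the epoch and suppose $\varepsilon$ satisfies $0 < \varepsilon < \mathrm{VD}_{\tau_n}(\mathcal{M}, \mathcal{M}_\delta)$ for the relevant $\tau_n$. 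I would argue that for any such $\varepsilon$ there exists a finite sample set $\tau_n$ of $n$ distinct transitions drawn from $\delta$ such that once all of $\tau_n$ has been observed, the learned model $\mathcal{M}'$ has $\mathrm{VD}_\mathcal{T}(\mathcal{M}', \mathcal{M}_\delta) \le \varepsilon$ on any $\mathcal{T} \supseteq \tau_n$ — which is exactly the definition of locally convergent model learning.

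The key steps, in order, are: (1) \textbf{Discrepancy detection is complete.} Because every simulator transition $(s,a,s')$ is checked for $\mathcal{M}$-consistency (lines 12--15) and every inconsistent transition triggers \texttt{addInconsistentPredicates}, any predicate $p^\uparrow$ that appears in a precondition or effect of some action $a$ differently in $\mathcal{M}$ than in $\mathcal{M}_\delta$ will, upon a witnessing transition being sampled, be flagged for relearning. I would make precise that the set of ``witnessing'' transitions for each such $p^\uparrow$ has positive probability of being generated, either during the explicit \texttt{explore} random walks (triggered whenever the current policy cannot reach the goal under $\mathcal{M}^\uparrow$ or after $\beta$ failures, lines 4--5) or during ordinary policy execution. (2) \textbf{Query synthesis reduces to solvable FOND problems.} Once $p^\uparrow$ is flagged, the candidate models $\mathcal{M}_{a^\uparrow}^{+p}, \mathcal{M}_{a^\uparrow}^{-p}, \mathcal{M}_{a^\uparrow}^{\otimes p}$ are reinstated and, as in AQML, each pair yields a FOND problem whose solution policy is $p$-distinguishing or the problem is unsolvable; solving it requires no simulator steps. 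Executing the distinguishing policy (at most $H$ steps per trace) against $\delta$ resolves the mode of $p^\uparrow$ correctly because $\delta$ is fixed throughout the epoch. (3) \textbf{Probability refinement via MLE and chi-square.} After the structural modes are correct, the goodness-of-fit test (Sec.~\ref{subsec:goodness_of_fit_test}) keeps $\textit{Prob}_{a^\uparrow}$ estimates matched to $\delta$, and since $\tau\rightleftharpoons\mathcal{M}$ only requires $\textit{Prob}[i] > 0$ for the observed effect, variational distance is driven to zero once every reachable effect has been observed at least once. (4) \textbf{Assemble $\tau_n$ and $n$.} Take $\tau_n$ to be the (finite) union of one witnessing transition per mis-modeled predicate-action pair plus one transition per (precondition-satisfying state, effect) pair needed to certify the support of each $\textit{Prob}$ list; $n = |\tau_n|$. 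Any $\mathcal{T} \supseteq \tau_n$ then forces $\mathcal{M}'$ to agree with $\mathcal{M}_\delta$ on every transition in $\mathcal{Z}$ whose $\mathcal{M}$-consistency bit could differ, so $\mathrm{VD}_\mathcal{T}(\mathcal{M}',\mathcal{M}_\delta) = 0 \le \varepsilon$.

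I expect the main obstacle to be step (1): showing that the algorithm actually \emph{reaches} the witnessing transitions within the epoch, i.e., that the discrepancy-detection mechanism cannot get ``stuck''. The subtlety is that the policy $\pi$ computed from the (possibly wrong) model $\mathcal{M}^\uparrow$ might loop among states that never exercise the mis-modeled predicate, and the \texttt{explore} trigger fires only on goal-unreachability or after $\beta$ consecutive failures. I would handle this by arguing that if $\mathcal{M}^\uparrow$ disagrees with $\mathcal{M}_\delta$ in a way that matters for reaching the goal, then either the planner declares the goal unreachable (triggering \texttt{explore}) or the executed policy fails to reach the goal within $H$ (incrementing $f$, eventually exceeding $\beta$ and triggering \texttt{explore}); and the random walk of length $H$ in \texttt{explore}, executed from $s_0$, assigns positive probability to every finite trajectory and hence to every reachable witnessing transition. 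Strictly, the statement should be read as convergence \emph{in the limit of the epoch length} (or with probability one as available steps $\to\infty$), which is consistent with the ``when stationarity holds'' qualifier in the abstract; I would state this caveat explicitly. A secondary, more routine obstacle is bounding the interaction between concurrently flagged predicates — here I would note, following Sec.~\ref{subsec:non_stationarity_model_learning}, that already-learned information is monotonically carried into subsequent FOND problems, so relearning one predicate never un-learns another within a stationary epoch, and the process terminates after finitely many predicate-resolution rounds.
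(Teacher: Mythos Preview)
Your proposal takes a genuinely different route from the paper's proof. The paper gives a short counting argument that never traces through the algorithm's control flow: it invokes the correctness property of AQML (their cited Thm.~2) to assert that the set of transitions generatable by the current model $\mathcal{M}$ is a subset of those generatable by $\mathcal{M}_\delta$; writes $\mathrm{VD}(\mathcal{M},\mathcal{M}_\delta)=x/z$ over the full transition set $Z$ of size $z$; picks $\tau_n$ to be any $n>z\varepsilon$ transitions from the difference set (transitions consistent with $\mathcal{M}_\delta$ but not with $\mathcal{M}$); and concludes by arithmetic that after AQML learns on $\tau_n$ the variational distance drops by at least $n/z>\varepsilon$. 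A monotonicity property of AQML (their cited Thm.~1) then handles supersets $\mathcal{T}\supseteq\tau_n$. That is the entire argument.

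The important contrast is that the definition of \emph{locally convergent model learning} is a property of the \emph{learning procedure given data}, not of the data-collection strategy: it asks only that \emph{some} finite $\tau_n$ exist such that any training set containing it yields a model within $\varepsilon$. The paper exploits this and simply exhibits $\tau_n$, appealing to AQML's guarantees about what the learner does with it. Your steps (1) and much of (2)--(3) --- discrepancy detection completeness, the \texttt{explore} trigger, random-walk reachability, and the ``main obstacle'' you flag --- are establishing that Alg.~\ref{alg:learn_and_plan} will actually \emph{encounter} such transitions during the epoch. That is a stronger and, for this theorem, unnecessary claim; the caveat you add about ``convergence in the limit of the epoch length'' is likewise extraneous to what the definition demands. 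Your step~(4) is closer in spirit to the paper's construction, but you aim for $\mathrm{VD}=0$ via one witness per mis-modeled predicate rather than the paper's $n>z\varepsilon$ slice, and you never invoke the two AQML theorems that do the heavy lifting in the paper's version. Your route can be made to work, but the cleanest fix is to drop the reachability analysis entirely, cite the AQML subset/monotonicity results, and argue directly on the size of $\tau_n$ as the paper does.
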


\begin{proof}[Proof (Sketch)]
% Intuitively, any set of transitions $\mathcal{T}_i$ collected between $t_i$ and $t_{i+1}$ are consistent with $\delta_i$ since the transition system is stationary between this duration, i.e., $\forall \tau \in \mathcal{T}_i, \tau \rightleftharpoons \mathcal{M}_{|\delta_{i}}$. 
% As shown by ~\citet{DBLP:journals/corr/abs-2306-04806}, if 
% all the transitions that can possibly be generated using 
% the set of all transitions
% $\delta$ 
% (let us call this set $U$) 
% are used, then the corresponding model $\mathcal{M}$ that is learned will be consistent with the accurate (unknown) model $\mathcal{M}_{|\delta}$. Let $\tau_n \subset U$ be a set of transitions from which $\mathcal{M}$ was learned and let the VD w.r.t. $U$ be VD$_U(\mathcal{M}, \mathcal{M}_{|\delta_i}) = \varepsilon$. This must mean that there are $|U|\varepsilon$ transitions in $\delta$ that are not $\mathcal{M}$-consistent. 
% Let $\mathcal{K}$ be the set of the such transitions. Let VD$_{\mathcal{K}'}(\mathcal{M}'', \mathcal{M}_{|\delta_i}) = k$ be the VD obtained by learning a model $\mathcal{M}''$ using any $\mathcal{K}' \subseteq \mathcal{K}$. It follows that $k \leq \varepsilon$. Thus, we can select a new set of transitions $\mathcal{T}'_i = \mathcal{T}_i \cup \mathcal{K}'$ to learn a model $\mathcal{M}'$ s.t. VD$_{\mathcal{T}'_i}(\mathcal{M}', \mathcal{M}_{|\delta_i}) \leq \varepsilon - k$. 

Let $\mathcal{M}$ be the learned model at timestep $i$. By the correctness property of AQML (Thm.~2 in \citet{DBLP:journals/corr/abs-2306-04806}) the set of transitions that $\mathcal{M}$ can generate must be a subset of the ones that $\mathcal{M}_{\delta_i}$ can. Let $Z = \{s: (s, a, s’)|s,s’\in S, a\in A)\}$ and let $z = |Z|$.
Let VD$(\mathcal{M}, \mathcal{M}_\delta)$ be $x/z$. $\varepsilon$ has to be such that  $0<\varepsilon<x/z$.
Let $\mathcal{M}’$ be the model learned using a set of transitions $\tau_n$ that are consistent with 
$\mathcal{M}_\delta$ but cannot be generated by $\mathcal{M}$. Choose $\tau_n$ such that $\tau_n$ has exactly $n (>z\epsilon)$ elements. Now, using the model $\mathcal{M}’$ that AQML learns, it will be able to generate $\tau_n$ in addition to all the transitions that $\mathcal{M}$ could generate. This implies:
VD $(\mathcal{M}, \mathcal{M}_\delta) - VD (\mathcal{M}’, \mathcal{M}_\delta)$= $x/z -(x-n)/z > x/z - (x-z\epsilon)/z = x/z - x/z + (z\epsilon)/z = \varepsilon$,
and we have the desired result with $\tau_n$ as the set that is required for local convergence. By properties of AQML (Thm.~1 in \citet{DBLP:journals/corr/abs-2306-04806}) any superset of transitions valid under $\mathcal{M}_\delta$ that contains $\tau_n$ will also reduce VD by at least $\varepsilon$.
\end{proof}

% \begin{theorem}
% Given an RMDP $M$ with a stationary transition system $\delta$, a simulator $\Delta$ and failure threshold $\beta = 0$, Alg.\,\ref{alg:learn_and_plan} will converge to a model $\mathcal{M} \equiv \delta$. 
% \end{theorem}

% \begin{proof}[Proof (Sketch)]
% If the failure threshold is 0 then that means that line 6 of Alg.\,\ref{alg:learn_and_plan} will always try to find a transition that is not $\mathcal{M}$-consistent with the currently learned model $\mathcal{M}$. Any inconsistent transition is resolved by the model learning step and thus eventually all transitions will be explored leading to all such inconsistencies being resolved. Since this exploration step does not stop and since MLE estimates are always updated, these too would converge resulting in a model that is equivalent to $\delta$.
% \end{proof}

\begin{figure*}
\centering
\includegraphics[width=\textwidth]{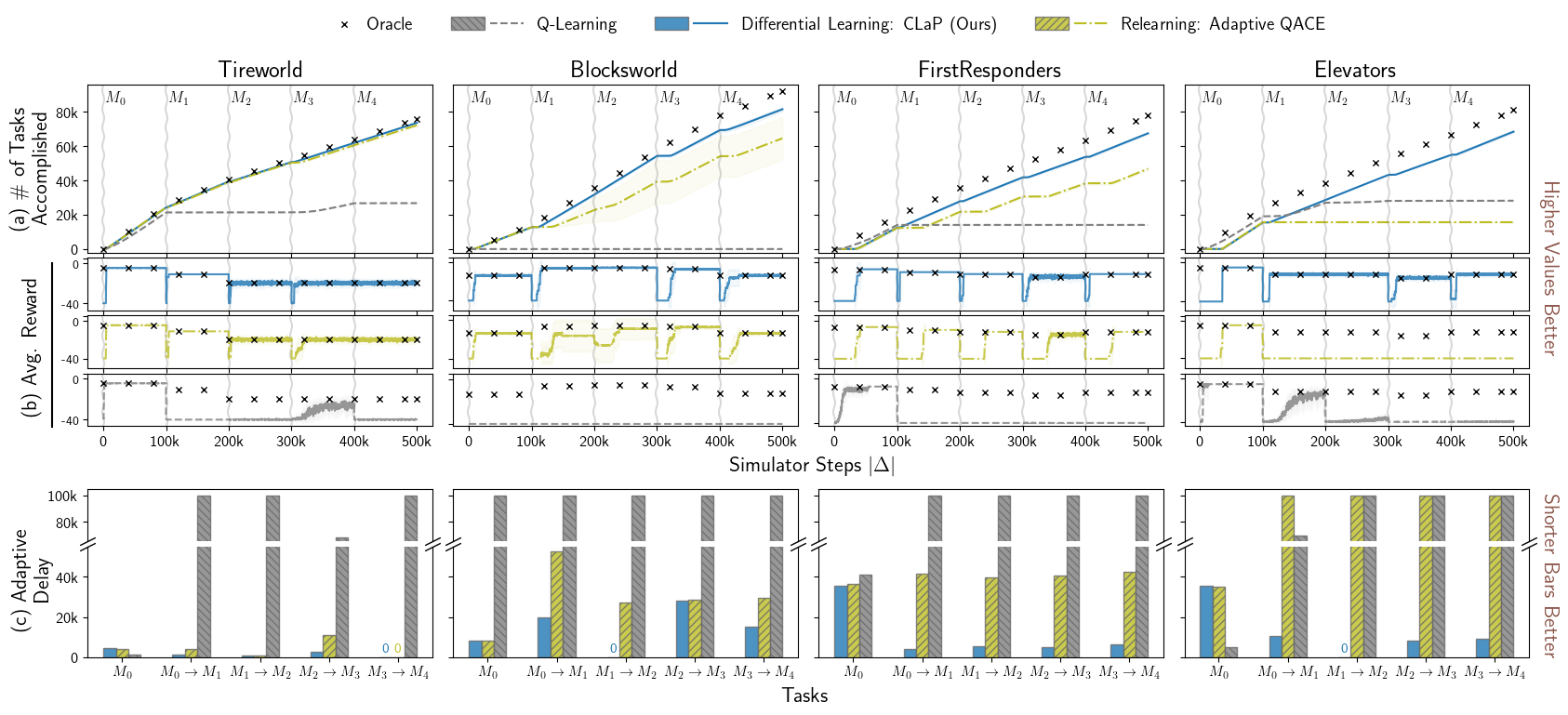}
\caption{Results (best viewed in color) from our experiments averaged across 10 runs with 1-std deviation (shaded).  (a) plots the learning curves of the methods, (b) plots the avg. reward obtained by greedily running the policy computed 10 times (for clarity, the Oracle's avg. reward is annotated with $\times$ periodically), (c) plots the total steps needed to achieve steady-state performance (defined in Sec.\,\ref{subsec:analysis_results}) equal to the Oracle's. Higher values are better for (a) and (b); lower for (c). Vertical squiggly lines denote the step where a new  task $M_{i+1}$ and transition system $\delta_{i+1}$ were loaded ($M_i \not= M_{i+1}$ and $\delta_i \not= \delta_{i+1}$).}
\label{fig:results}
\end{figure*}

\section{Experiments}
\label{sec:experiments}

We implemented our approach (Alg.\,\ref{alg:learn_and_plan}) in Python 3 and performed an empirical evaluation on four benchmark domains using a single core on a Xeon E5-2680 v4 CPU running at 2.4 GHz with a memory limit of 8 GiB. We found that our approach leads to significantly better transfer performance as compared to the baselines. We describe the empirical setup that we used for conducting the experiments followed by a discussion of the obtained results (Sec.\,\ref{subsec:analysis_results}).
% \footnote{Source code is included in the submission (see App.\, \ref{appendix:source_code}).}

\noindent\textbf{Domains} We used four benchmark domains that have been used in various 
% International Planning Competition (IPC) and 
International Probabilistic Planning Competitions (IPPCs)\!~\footnote{https://www.icaps-conference.org/competitions/}
% \citep{DBLP:journals/jair/YounesLWA05} 
for our experiments. We used these benchmark domains since ground truth models for them are available and we synthesized simulators using these domains.

We briefly describe the domains that we used below. We refer to each domain as $\mathcal{D}^\uparrow(|\mathcal{P}^\uparrow|, |\mathcal{A}^\uparrow|)$ to indicate the total number of predicates and actions in the domain.

\noindent\textbf{Tireworld$(4, 2)$} is a popular domain that has been used in several IPPCs. The objective of this IPPC benchmark is to drive from the initial position to the goal position (accounting for flat tires that can stochastically occur).

\noindent\textbf{FirstResponders$(13,10)$} is a domain inspired from emergency services. The objective is to put out all fires and treat all victims. To do so, a planning agent needs to be able to plan to reach locations under fire and put them out (refilling water as needed) and also treat victims either at the fire site or ferry them to a hospital if the injuries are too severe. 

\noindent\textbf{Elevators$(9, 10)$} is a stochastic extension of the deterministic Miconic \cite{DBLP:journals/jair/LongF03} domain wherein there are several new objectives such as coins to be collected and elements such as shafts and gates that constrain navigation. 

\noindent\textbf{Blocksworld$(5, 4)$} is an environment where the goal is to arrange blocks in specific configurations. The IPPC variant is ExplodingBlocks wherein the table can be destroyed whilst stacking blocks. We tried to generate problems for ExplodingBlocks but were unsuccessful and as a result used the ergodic version instead where stacking blocks has a chance to drop them on the table. Nevertheless, the non-stationarity we introduce (described below) can often introduce dead-end states (i.e., states from which the goal cannot be reached).

\noindent\textbf{Task Generation} All tasks in the benchmark suite share a single transition system and, to the best of our knowledge, there are no official problem generators that can introduce non-stationarity and generate tasks for it. Thus, we introduced non-stationarity by generating new domain files obtained by changing a randomly selected action from the domain file of the previous task that was generated. We performed between 0 -- 3 changes in both the preconditions and effects of the selected action by adding or deleting a predicate or by modifying an existing predicate in the action's model and ensured that at least one change was made. This method of introducing non-stationarity resulted in the transition system of the final task being significantly different from the benchmark task  with several actions changed. \\
\textbf{Task Setup} We generated five different tasks $M_0, \ldots, M_4$ with different initial states and goals. $M_0$ was the benchmark task and the others were generated using Breadth First Search. We used $\gamma=0.9$ and horizon $H=40$ for all tasks.  

\noindent\textbf{Baselines} We used Q-Learning as our non-transfer RL baseline. We also used an Oracle that has complete access to the closed-form model of the simulator and uses LAO$^*$ to compute policies. The Oracle baseline provides an upper bound on the performance achievable by any algorithm. 

We utilized QACE \citep{DBLP:journals/corr/abs-2306-04806}, a SOTA stochastic model learner, as the AQML-based model-learning algorithm for developing our second baseline. We modified QACE to detect changes in the transition system thus creating a SOTA adaptive baseline called Adaptive QACE. When an inconsistency is detected, Adaptive QACE invokes QACE to relearn the model from scratch. The extended version \citep{karia2024epistemic} includes an ablation called Non-adaptive QACE wherein QACE is informed when a change in the transition system occurs. 

We also considered ILM \cite{DBLP:conf/aips/NgP21} since it can learn noisy deictic rules but could not use it despite employing significant effort (and contacting the authors). 

We compare the baselines against our system (CLaP): an active, adaptive, need-based learner implementing Alg.\,\ref{alg:learn_and_plan}. 
\\
\textbf{Hyperparameters} We used $\alpha = 0.3$ for Q-Learning, $\eta=100$ for the AQML-based methods and CLaP. Additionally, we used $\beta=10$ and $\theta=0.05$ for CLaP.

\subsection{Analysis of Results}
\label{subsec:analysis_results}

As mentioned in Sec.\,\ref{sec:background}, we consider a task accomplished when a goal state is reached. We used a simulator budget $\Delta_\mathcal{S} = 100k$ for each task. The transition system is kept stationary for $\Delta_\mathcal{S}$ steps. The simulator is then loaded with a new task $M_{i+1}$ and a new transition system $\delta_{i+1}$.

Fig.\,\ref{fig:results} shows the obtained results from our experiments with 10 different random seeds used by the algorithms. We analyze the results to answer the following questions.
\begin{enumerate}[label=\alph*.]
    \item Is CLaP sample efficient?
    \item Are CLaP solutions performant?
    \item Are CLaP solutions generalizable?
\end{enumerate}

\noindent\textbf{Evaluation Metrics} We use the following evaluation metrics to answer the questions above; We answer (a) by plotting learning curves that showcase how many tasks were accomplished during the learning process; We answer (b) by comparing the policy quality wherein at every $k=100$ simulator steps, we freeze the computed policy and generate 10 policy traces each starting from the initial state $s_0$ of the task with a maximum horizon of 40. These simulations do not count towards the simulators budget. We report the average reward obtained while doing so; We answer (c) by computing the adaptive delay \citep{DBLP:journals/corr/abs-2203-12117} which measures how many steps are necessary in the environment before the steady-state performance converges to that of the Oracle's. We defined steady state performance as the total steps needed in an environment after which performance in an episode was always within $
2\sigma$ of the Oracle's.

It is clear from Fig.\,\ref{fig:results} that our approach of continual learning and planning (CLaP) outperforms both non-transfer (Q-Learning)  and model-based relearning (Adaptive QACE).

\noindent\textbf{(a) Sample Efficiency} Our results in Fig.\,\ref{fig:results}(a) show that CLaP has a much better sample complexity compared to the baselines. The learning curves from FirstResponders, Elevators and Blocksworld show that our approach can accomplish significantly more tasks than the baselines. Q-Learning does not learn and transfer any knowledge and thus needs to collect large amounts of experience to solve tasks. 

Adaptive QACE cannot efficiently correct the model when transition systems change since it needs to relearn all actions to converge. This drawback of comprehensive learners is highlighted in the results on the Elevators domain where even Q-Learning outperformed Adaptive QACE.  For the Elevators domain, the transition system change rendered some task-irrelevant actions executable from states that were reachable only over very long horizons. Adaptive QACE exhausted the simulator's budget trying to relearn these task-irrelevant actions and thus was not able to solve the task. CLaP on the other hand only lazily-evaluates whether to learn a fraction of the model or not and was able to quickly fix the learned model and compute a policy that could solve the task. These trends can also be seen in FirstResponders where Adaptive QACE must relearn 10 actions from scratch every time an inconsistency is observed.
\\
\textbf{(b) Better Task Performance} Fig.\,\ref{fig:results}(b) shows that avg. rewards of CLaP policies are very close to the Oracle's. This suggests that our learned models are often good approximations of the transition system. CLaP's policies converge to those of the Oracle's across all tasks in our evaluation. 
\\
\textbf{(c) Better Generalizablity} Our approach has a significantly lower adaptive delay (Fig.\,\ref{fig:results}(c)), i.e., CLaP is able to utilize and transfer the learned knowledge across problems efficiently compared to the baselines that take a significant number of samples to converge to the Oracle's performance. For example, CLaP zero-shot transferred (adaptive delay was 0) between Blocksworld tasks $M_1$ and $M_2$ requiring no learning to solve task $M_2$ while also matching the Oracle's performance. In cases where adaptation was needed (e.g., between Blocksworld tasks $M_0$, $M_1$, and $M_2, M_3$) CLaP few-shot learns the required knowledge to accomplish the task with policy qualities similar to that of the Oracle. In general, CLaP's adaptive delay was the best among all baselines.

We also conducted a directed experiment to evaluate the adaptability of our method to changing distributions. To do this, we generate two tasks from a $2$-armed bandit domain. Pulling any of the levers stochastically takes the agent to the goal. Thus, the optimal policy is to repeatedly pull the lever with the highest probability of reaching the goal. In task one, the first (second) lever would succeed with probability 0.8 (0.2). In the second, it was 0.1 (0.9) respectively with preconditions and effects unchanged. CLaP utilizes goodness of fit tests and thus was able to adapt to this distribution shift and chose lever 1 (2) for task one (two). Adaptive QACE cannot adapt to such changes and continued to use lever 1 for task two. This resulted in its policies being 9x worse than CLaP's with overall only $\approx$950 goals achieved compared to CLaP's $\approx$1550 ($\Delta_\mathcal{S} = 1000$ per task, $\eta=10$). Plots are available in the extended version \citep{karia2024epistemic}.

\noindent\textbf{Limitations and Future Work} Currently, CLaP does not consider the task goal in the model learning process (line 7 of Alg.\,\ref{alg:learn_and_plan}). Making optimistic estimates about the model w.r.t. the goal might allow the model learner to expend fewer samples for learning a model that can accomplish the task.

We do not take into account transition system changes or goals that could be provided in advance. CLaP could utilize that information to develop a curriculum so that useful, unlikely-to-change actions are prioritized to be learned early even if they do not contribute towards the current task's goal.

PPDDL models have expressiveness limitations such as difficulty in modeling exogenous effects. Future work could use techniques like inductive logic programming to learn models that are more expressive than PPDDL models.

\textit{When is it better to learn-from-scratch} There were not many performance gains compared to Adaptive QACE in the Tireworld domain. This is because Tireworld is a small domain with only 2 (4) actions (predicates) that need to be learned. Devising heuristics that can evaluate whether learning from scratch would be easier than correcting the model is an interesting problem that we leave to future work. 

\section{Related Work}
\label{sec:related_work}

There has been plenty of work for transfer in RL \citep{DBLP:journals/corr/MnihKSGAWR13,DBLP:journals/corr/SchulmanWDRK17} and on non-stationarity (commonly referred to as \emph{novelty} in the RL literature). We focus on approaches that transfer across RMDP tasks. \citet{DBLP:conf/nips/TadepalliGD04} provides an extensive overview for relational RL approaches. 

\noindent\textbf{Model-Based Reinforcement Learning}
The Dyna framework \citep{DBLP:conf/icml/Sutton90} forms the basis of several model-based reinforcement learning (MBRL) approaches. \citet{DBLP:conf/aips/NgP21} use conjunctive first-order features to learn models and generalizable policies that transfer to related classes of RMDPs. Their approach does not perform guided exploration to resolve ambiguities. REX \citep{DBLP:journals/jmlr/LangTK12} enables MBRL to automatically learn tasks autonomously. One challenge with this approach is learning accurate models since exploration can be sparse when using REX. V-MIN \citep{DBLP:journals/jmlr/MartinezARIT17} integrates model-learning and planning with RL by requesting demonstrations from a teacher if it cannot find a policy whose expected value is greater than a certain threshold. The requirement of an available teacher limits the transfer capabilities of this approach. Taskable RL (TRL) \citep{DBLP:conf/aips/IllanesYIM20} and RePReL \citep{DBLP:journals/nca/KokelNRT23} show how Hierarchical Reinforcement Learning (HRL) using the options framework can be used for TRL. They use symbolic plans to guide the RL process. This approach requires models provided as input and are not learned. In contrast, our generates its own data for learning models using an active learning process.

\noindent\textbf{Learning Models for Non-Stationary Settings}
GRL \citep{DBLP:conf/ijcai/KariaS22} train a neural network to learn reactive policies that can transfer to problems from the same domain but with different state spaces. Their approach is limited to only changes in the state space and cannot adapt to changes in the transition dynamics. \citet{DBLP:conf/aaai/Nayyar0S22} and \citet{musliner2021openmind} learn PDDL models whereas approaches like \citet{sridharan} and \citet{DBLP:conf/aips/SridharanMG17} use Answer Set Prolog to represent domain knowledge. These approaches work in non-stationary environments and can be integrated into the interleaved learning and planning loop. However, they only learn deterministic models.
\citet{Bryce2016} address the problem of learning the updated mental model of a user using particle filtering given prior knowledge about the user's mental model. However, they assume that the entity being modeled can tell the learning system about flaws in the learned model if needed.

 \citet{eiter_2010_updating} propose a framework for updating action laws depicted in the form of graphs representing the state space. They assume that changes can only happen in effects, and that knowledge about the state space and what effects might change is available beforehand. 
There is a large body of work on adapting symbolic models to novelties in open-world environments for RL \citep{goel2022rapid,balloch2023neurosymbolic,sreedharan2023optimistic,mohan2023domainindependent}. These methods are limited to deterministic settings and/or can only learn new models from passively collected data.
 
% Our work does not make such assumptions to learn the correct model of the agent's functionalities.

% \citet{sreedharan2023optimistic} use model learning to 

% \citet{lee2023hierarchical}

% PEORL~\cite{yang2018peorl}

% SDRL~\cite{lyu2019sdrl}

\section{Conclusions}
\label{sec:conclusions}

We developed a sample-efficient method for transferring epistemial knowledge between an interleaved learning and planning process. Our approach can easily handle non-stationary environments on-the-fly by automatically detecting any changes that are inconsistent with the learned model. We reduce sample complexity by only updating parts of the model that are inconsistent with the simulator's execution. Our approach is resilient to changes in the transition system even if it occurs during the model learning process. We show that when the transition system is stationary our approach is locally convergent. Furthermore, our learned lifted models easily transfer to new tasks. Our empirical results show that our approach  significantly reduces sample complexity whilst remaining performant with respect to the optimal policy.

\section*{Acknowledgements}
We would like to thank Gaurav Vipat for help with an earlier version of the source code. This work was performed in collaboration with Lockheed Martin and was supported in part by the ONR under grant N00014-21-1-2045.

\bibliography{aaai24}

\clearpage
% \appendix

% \section{Directed 2-armed Bandit Test}

% \begin{figure}
% \centering
% \includegraphics[width=\columnwidth]{}
% \caption{Results from our directed 2-armed bandit test averaged across 10 runs with 1-std deviation (shaded).  (a) plots the learning curves of the methods, (b) plots the avg. reward obtained by greedily running the policy computed 10 times (for clarity, the Oracle's avg. reward is annotated with $\times$ periodically). Vertical squiggly lines denote the step where a new  task $M_{i+1}$ and transition system $\delta_{i+1}$ were loaded ($M_i \not= M_{i+1}$ and $\delta_i \not= \delta_{i+1}$).}
% \label{fig:directed_gof}
% \end{figure}

% Fig.\,\ref{fig:directed_gof} shows the results obtained from our directed 2-armed bandit test. CLaP uses goodness of fit tests and thus is able to quickly identify that the distribution of the first lever has changed. Once the correct probabilities are learned for the first lever it computes a new policy that identifies that lever 2 is more lucrative.

% \section{Notes on Source Code}
% \label{appendix:source_code}
% We have included the source code as a part of the submission. However, as the file upload limit is only 5 MiB we had to remove essential libraries that are required for the code to work. We do however include the complete source code of our algorithm. Please refer to the Readme file that points to the files implementing CLaP and the baselines.

% Similarly, our result CSV files are 7MB when compressed. We include data on only 5 of the 10 runs to fit within the file limit.

\end{document}